\newcommand{\ALGNAME}{GGT}
\newcommand{\SGDf}{ \widetilde{\nabla} f }
\newcommand\footnoteref[1]{\protected@xdef\@thefnmark{\ref{#1}}\@footnotemark}
\algnewcommand\algorithmicinput{\textbf{Input: }}
\algnewcommand\INPUT{\State\algorithmicinput}
\algnewcommand\algorithmicinitialize{\textbf{Initialize: }}
\algnewcommand\INIT{\State\algorithmicinitialize}
\algnewcommand\algorithmicrun{\textbf{Run: }}
\algnewcommand\RUN{\State\algorithmicrun}
\algnewcommand\algorithmicupdate{\textbf{Update: }}
\algnewcommand\UPDATE{\State\algorithmicupdate}
\algnewcommand\algorithmicset{\textbf{Set: }}
\algnewcommand\SET{\State\algorithmicset}
\algnewcommand\algorithmicquery{\textbf{Query: }}
\algnewcommand\QUERY{\State\algorithmicquery}
\algnewcommand\algorithmicoutput{\textbf{Output: }}
\algnewcommand\OUTPUT{\State\algorithmicoutput}
\def\norm#1{\mathopen\| #1 \mathclose\|}
\newcommand{\ignore}[1]{}
\def\reals{{\mathbb R}}
\def\bold0{\mathbf{0}}
\def\bI{\mathbf{I}}
\def\bV{\mathbf{V}}
\def\bI{\mathbf{I}}
\def\bV{\mathbf{V}}
\def\eps{\varepsilon}
\def\epsilon{\varepsilon}
\newcommand{\defeq}{\stackrel{\text{def}}{=}}
\newcommand{\braces}[1]{\left\{#1\right\}}
\newcommand{\pa}[1]{\left(#1\right)}
\newcommand{\bra}[1]{\left[#1\right]}
\DeclareMathOperator{\argmin}{argmin}
\newtheorem{theorem}{Theorem}[section]
\newtheorem{definition}[theorem]{Definition}
\newtheorem{lemma}[theorem]{Lemma}
\theoremstyle{definition}
\DeclareMathOperator*\E{\mathbb{E}}
\newcommand\R{\mathbb{R}}
\newcommand{\G}{\mathbf{G}}
\newcommand{\bU}{\mathbf{U}}
\newcommand{\bSig}{\mathbf{\Sigma}}
\title{Efficient Full-Matrix Adaptive Regularization}
\author{
  Naman Agarwal$^1$ \qquad Brian Bullins$^{3}$\footnote{Work performed at Princeton University and Google AI Princeton.} \qquad Xinyi Chen$^{1,2}$ \\
  Elad Hazan$^{1\,2}$ \qquad Karan Singh$^{2*}$ \qquad Cyril Zhang$^{4*}$ \qquad Yi Zhang$^{2*}$ \\
  \vspace{-2mm} \\
  $^1$ {\small Google AI Princeton} \\
  $^2$ {\small Department of Computer Science, Princeton University} \\
  $^3$ {\small Toyota Technological Institute at Chicago} \\
  $^4$ {\small Microsoft Research} \\
  \vspace{-2mm} \\
  {\small \texttt{namanagarwal@google.com}, \texttt{bbullins@ttic.edu},} \\
  {\small \texttt{\{xinyic, ehazan, karans, y.zhang\}@cs.princeton.edu}, } \\
  {\small \texttt{cyrilzhang@microsoft.com} }
}
\begin{document}

\maketitle

\begin{abstract}
Adaptive regularization methods pre-multiply a descent direction by a
preconditioning matrix. Due to the large number of parameters of machine
learning problems, full-matrix preconditioning methods are prohibitively
expensive. We show how to modify full-matrix adaptive regularization in order
to make it practical and effective. We also provide a novel theoretical analysis
for adaptive regularization in {\em non-convex} optimization settings. The
core of our algorithm, termed GGT, consists of the efficient computation of
the inverse square root of a low-rank matrix. Our preliminary experiments show
improved iteration-wise convergence rates across synthetic tasks and
standard deep learning benchmarks, and that the more carefully-preconditioned steps
sometimes lead to a better solution.
\end{abstract}


\section{Introduction}

Stochastic gradient descent is the workhorse behind the recent deep learning revolution. This simple and age-old algorithm has been supplemented with a variety of enhancements to improve its practical performance, and sometimes its theoretical guarantees.

Amongst the acceleration methods there are three main categories: momentum, adaptive regularization, and variance reduction. Momentum (in its various incarnations, like heavy-ball or Nesterov acceleration) is the oldest enhancement. It has a well-developed theory, and is known to improve practical convergence in a variety of tasks, small and large. It is also easy to implement.  Variance reduction is the most recent advancement; in theory and practice, it is mostly applicable to convex optimization, and is thus less influential in deep learning. 

This brings us to adaptive regularization: the most sophisticated, hard to implement, and widely-debated acceleration method. While state-of-the-art optimizers such as Adam and AdaGrad \citep{kingma2014adam, duchi2011adaptive} do use adaptive regularization, they exclusively restrict the preconditioner to a diagonal matrix; as such, they often marketed as per-coordinate ``adaptive learning-rate'' methods. Despite solid theoretical guarantees, the practical value of diagonal adaptive regularization as compared to ``vanilla'' SGD has been the subject of much debate \citep{wilson2017marginal}. 
However, the efficacy of full-matrix adaptive regularization has been relatively unexplored. This is due to the prohibitive computational cost associated with full-matrix operations: full AdaGrad requires taking the inverse square root of a large matrix. 

In this paper, we present \ALGNAME{}, a practical solution to the computational problems plaguing full-matrix adaptive regularization, making this technique scalable for modern deep models. At the heart of our method is a simple, GPU-friendly way to apply the inverse square root of the low-rank second-moment matrix of recent gradients; see Figure~\ref{fig:inverse-formula}. \ALGNAME{}'s running time is comparable to state-of-the-art optimizers.

We proceed to show that full-matrix preconditioning allows for much better exploitation of anisotropic curvature in loss landscapes. First, we show synthetic experiments which demonstate clear benefits of \ALGNAME{} over baselines, especially when the problem is ill-conditioned. Then, we implement \ALGNAME{} at scale, and show that the benefits translate to faster training on standard deep learning benchmarks.
Our improvement is most salient in complicated landscapes like RNN training.

Our algorithm comes with theoretical guarantees. We give the first proof of convergence to first-order critical points for an algorithm with adaptive regularization in a stochastic non-convex setting, featuring a rate which is dependent on an \emph{adaptive ratio}. We show examples where our bound is stronger than that for SGD, providing some theoretical basis for our empirical findings.

\subsection{Related work}
Since the introduction of AdaGrad \citep{duchi2011adaptive}, diagonal adaptive regularization has been a mainstay
in the machine learning practitioner's toolbox.
A quick perusal of the literature shows that these methods have continued to thrive in the deep learning era,
and appear in all major frameworks \citep{abadi2016tensorflow,paszke2017automatic,chen2015mxnet}.
By citation count (or GitHub search hits), Adam \citep{kingma2014adam} is by far
the most popular adaptive optimizer for training a variety of modern deep models.
For this reason, this paper's exposition is targeted towards a full-matrix drop-in replacement for Adam;
however, our techniques extend straightforwardly to a plethora of variants, like
RMSprop \citep{tieleman2012lecture}, Adadelta \citep{zeiler2012adadelta},
Nadam \citep{dozat2016incorporating}, etc.

Full-matrix adaptive regularization has existed alongside the more commonly used diagonal-matrix manifestation
since their common inception in \citep{duchi2011adaptive}; however, a major obstacle to the scalability of these
methods is the need for the storage and inversion of square matrices in the model dimension.
This becomes prohibitively expensive in dimension greater than $10^4$, while state-of-the-art models regularly exceed $10^7$ parameters.

Matrix sketching has been employed to approximate the AdaGrad preconditioner \citep{krummenacher2016scalable,mehta2016compadagrad};
however, the sketched estimate for the matrix inverse can be sensitive to noise.
Furthermore, there is a sizeable performance gap which renders these methods unsuitable for large-scale implementation.
For example, in the former reference, the authors report a 5-10$\times$ overhead over AdaGrad, even with $<\!10^5$ model parameters;
we could not find a usable GPU implementation for their requisite rank-1 QR update. Rather than sketching the preconditioner, our computations are exact, and efficient in practice.

\citep{gupta2018shampoo} propose a way to do AdaGrad with Kronecker products of full-matrix preconditioners, a more limited setting which requires knowledge of the model's structure.
Finally, as we argue in Section~\ref{subsection:experiments-synthetic},
there is intrinsic value of ``forgetting'' past curvature using an exponential window.
With this, a low-rank preconditioning matrix naturally arises,
allowing us to bypass the computational need for matrix sketching in the model dimension
or architecture-dependent restriction of the preconditioner.

Our algorithm bears a superficial resemblance to L-BFGS \citep{liu1989limited}:
both compute a preconditioner using a sliding window of gradient history.
However, L-BFGS uses differences of gradients to estimate the Hessian, while we use the gradients to keep an exact (exponentially decayed) gradient Gram matrix. In the former, estimating the Hessian with \emph{stochastic} gradients is very unstable. In a similar vein, stochastic second-order methods \citep[e.g.][]{erdogdu2015convergence,agarwal2017second,luo2016efficient,hazan2007logarithmic,agarwal2017finding,carmon2017convex} have seen limited adoption in deep learning. This is partly due to prohibitive overhead costs (despite asymptotic efficiency); however, more fundamentally, they exhibit very different training dynamics than the typical family of optimizers in deep learning.

Natural gradient descent \citep{amari1998natural} and K-FAC \citep[e.g.][]{martens2015optimizing,martens2018kronecker} precondition the gradient by the inverse (rather than inverse square root) of the gradient Gram matrix, or an efficiently invertible approximation thereof. Like in the above discussion, the training dynamics arising from these methods diverge from those of standard adaptive optimizers, as well as the underlying theoretical motivations.

Several recent works \citep{li2018convergence,zou2018convergence,ward2018adagrad,chen2018convergence,chen2018universal,zhou2018convergence} have studied the convergence of adaptive methods for non-convex optimization, matching the asymptotic iteration complexity of SGD. \cite{staib2019escaping} show that adaptive methods escape saddle points. Apart from our algorithmic contribution, our work is (to our knowledge) the first attempt to characterize the \emph{advantage} of adaptivity in terms of the dimension and geometry of the optimization problem.

\begin{figure}
  \centering
  \includegraphics[width=0.7\linewidth]{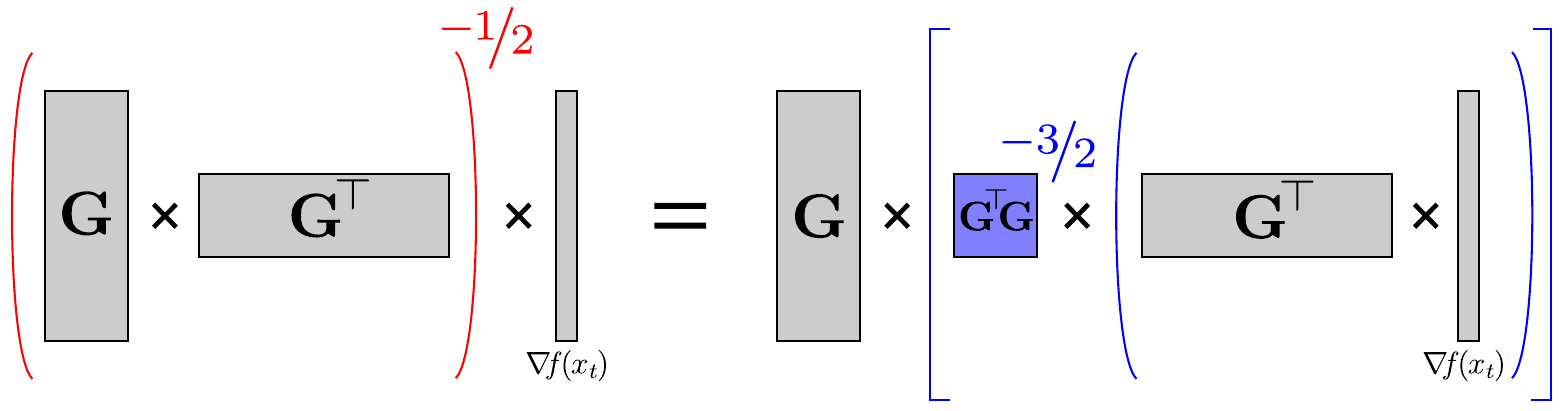}
  \caption{ Sketch of how \ALGNAME{} performs fast full-matrix preconditioning.
  Note that the inverse matrices are understood here to be Moore-Penrose pseudoinverses;
  see Section~\ref{subsection:alg-implementation} for a full treatment. }
  \label{fig:inverse-formula}
\end{figure}


\section{The \ALGNAME{} algorithm}

Our main algorithmic contribution is \ALGNAME{}, an efficient first-order algorithm for full-matrix adaptive preconditioning.
In brief, \ALGNAME{} uses the preconditioner from full-matrix AdaGrad, with gradient history
attenuated exponentially as in Adam, and truncated to a window parameter $r$.
The name \ALGNAME{} acts as a convenient mnemonic for the gradient second-moment matrix $\G\G^\top$
maintained by full-matrix AdaGrad, even though \emph{we never compute this matrix}.

The mathematical specification of \ALGNAME{} is given in Algorithm~\ref{alg:ours},
in the usual model of stochastic optimization (see Section~\ref{sec:theory}),
with gradients $\widetilde\nabla \! f(x)$. Notice that the coordinate-wise scaling of Adam is recovered by zeroing out the off-diagonal entries of $\G\G^\top$.

\begin{algorithm}
\caption{ \ALGNAME{} adaptive optimizer }
\begin{algorithmic}[1]
\INPUT initializer $x_1$, window size $r$, learning rate schedule $\braces{\eta_t}$, $\beta_2 \leq 1, \eps > 0$.
\For{$t = 1, \ldots, T$}
  \State Receive stochastic gradient $\widetilde\nabla f(x_t)$.
  \State Let $\G_t = [g_t\ g_{t-1}\ \dots\ g_{t-r+1}]$, where $g_{t-k} := \beta_2^k \widetilde\nabla \! f(x_{t-k})$,
  or $\mathbf{0}$ if $k \geq t$.
  \State $x_{t+1} \leftarrow x_t - \eta_t \cdot [ (\G_t\G_t^\top)^{1/2} + \eps \bI ]^{-1} \, \widetilde \nabla \! f(x_t)$.
\EndFor
\end{algorithmic}
\label{alg:ours}
\end{algorithm}

GGT provides the power of full-matrix adaptive regularization at a cost not much larger than SGD. This crucially exploits the fact only a small window of historical gradients are used for preconditioning. The intuition for using a small window, as opposed to the entire history, is clear (and time-tested, by the ubiquity of Adam): the curvature of the loss surface changes, rendering previous gradient information obsolete. We expand on the benefits of forgetting gradients in section \ref{subsection:experiments-synthetic}. 

The fact that the preconditioning matrix is based on a small window of gradients implies that it has low rank. GGT exploits this fact by computing the inverse square root of the empirical covariance matrix indirectly, as outlined in Figure~\ref{fig:inverse-formula}. In effect, instead of inverting a full matrix in the dimension of parameters, using the special matrix structure GGT inverts a matrix of dimension window-size. 
The remainder of this section will discuss efficient implementation and some heuristics.

GGT has provable guarantees even for non-convex optimization: it is guaranteed to converge to a first-order critical point. Its rate of convergence is never significantly slower than that of SGD, and in some favorable geometric conditions, can be significantly faster. These theoretical bounds are made precise in Section~\ref{sec:theory}.

\subsection{Fast low-rank preconditioning}
\label{subsection:alg-implementation}

The window parameter $r$ should be roughly the number of copies of the model that fit in RAM;
in our large-scale experiments, we use $r = 200$.
A pessimistic but principled choice is $r = \Theta( 1/(1-\beta_2) )$,
which truncates on the time scale of the exponential attenuation.
Our key observation, highlighted in Figure~\ref{fig:inverse-formula},
is that the inversion of the large low-rank matrix $\G\G^\top$ can be performed by
diagonalizing the small matrix $\G^\top\G$, along with some extremely GPU-friendly matrix-vector operations.

The basic intuition is contained in Figure~\ref{fig:inverse-formula}, but it remains
to include the $\eps \bI$ term. We derive the full update here.
Let $\G \in \R^{d \times r}$, $v \in \R^d$ be arbitrary, with $r \leq d$.
Write the singular value decomposition $\G = \bU\bSig \bV^\top$,
with $\bU \in \R^{d \times d}, \bSig \in \R^{d \times r}, \bV \in \R^{r \times r}$.
Let $\bSig_d \in \R^{d \times d} := [\bSig\;0]$,
and let $\bSig_r \in \R^{r \times r}$ be its top left block.
Let $\bU =: [\bU_r\ \bU_{d-r}]$, so that the columns of
$\bU_r \in \R^{d \times r}$ are an orthonormal basis for the column space of $\G$,
and $\bU_{d-r} \in \R^{d \times (d-r)}$ its orthogonal component, noting that $\bU_r \bU_r^\top + \bU_{d-r} \bU_{d-r}^\top = \bI_{d}$.
Then, we have
\begin{align*}
\big[ (\G&\G^\top)^{1/2} + \eps \bI \big]^{-1} v = \left[ (\bU\bSig_d^2 \bU^\top)^{1/2} + \eps \bU\bU^\top \right]^{-1} v \\
&= \bU(\bSig_d + \eps \bI )^{-1}\bU^\top v \\
&= \left[ \bU_r(\bSig_r + \eps \bI_{r})^{-1}\bU_r^\top + \bU_{d-r}( \eps \bI_{d-r} )^{-1}\bU_{d-r}^\top \right] v \\
&= \bU_r(\bSig_r + \eps \bI_r)^{-1}\bU_r^\top v + \frac{1}{\eps}(\bI_d - \bU_r\bU_r^\top) v \\
&= \frac{1}{\eps}v + \bU_r \bra{ (\bSig_r + \eps \bI_r)^{-1} - \frac{ 1 }{\eps} \bI_r } \bU_r^\top v.
\end{align*}
The first term is none other than an SGD update step. The rest can be computed
by taking the eigendecomposition $\G^\top \G = \bV \bSig_r^2 \bV^\top$,
giving $\bU_r = \G \bV \sqrt{\bSig_r}^\dagger$.
We prefer this to taking the direct SVD of $\G$, which is $> \! 10$ times slower on GPU.

Using a cyclic buffer to store and update $\G_t$, the algorithm takes $O(dr^2 + r^3)$ (sequential) time per iteration,
and $O(dr)$ memory in total. Iterating over the model parameters to update $\G_t$ incurs the same
overhead cost as usual adaptive optimizers.
The $r\times d$ matrix multiplication and $r\times r$ SVD operations
benefit from decades of extensive hardware-level optimizations.

In the experiments in Section~\ref{section:experiments}, we observed a $\sim \! 1.3\times$ (CNN) and $\sim \! 2\times$ (RNN) running-time overhead compared to SGD; we note that this ratio could be even smaller in reinforcement learning (where the environment causes the time bottleneck),
or universally with a more optimized implementation.

\subsection{Tweaks for \ALGNAME{} on deep models}
\label{subsection:alg-heuristics}

Below, we list some practical suggestions for applying \ALGNAME{} to training large-scale models.

\textbf{Momentum.} In order to bring \ALGNAME{} closer to a drop-in replacement for Adam,
we can add momentum to the gradient steps: let $v_{t} \leftarrow \beta_1 v_{t-1} + \widetilde\nabla \! f(x_t)$,
and apply the preconditioner to $v_t$ to compute the update step.
We use momentum in all large-scale experiments, with the standard $\beta_1 = 0.9$.
We also get a small performance boost by using $v_t$ instead of the gradients
to update $\G_t$. On the other hand, as long as $r \ll T$,
it makes little difference to choose $\beta_2 = 1$, letting the window
(rather than exponential attenuation) forget stale gradient information.

\textbf{Interpolation with SGD.} We note the possibility of decoupling the scalars $\eps$ and $1/\eps$
which appear in the efficient update step. Appealingly, this allows the user to tune \ALGNAME{}'s
behavior to be arbitrarily close to that of SGD.

\textbf{Numerical concerns.} For greater numerical stability, it is possible to add a small
multiple of the identity matrix (we suggest $10^{-6}$) to $\G^\top \G$ before computing its eigendecomposition,
without noticeable differences in training.


\section{Experiments} 
\label{section:experiments}
In this section, we present an empirical study of \ALGNAME{}.
We begin with some simple experiments, showing that adaptive methods help in the presence of ill-conditioned optimization problems, as well as the value of limited gradient memory.
Next, we evaluate the performance of \ALGNAME{} on larger-scale deep learning tasks (and provide some additional such experiments in Appendix~\ref{sec:appendix-experiments}).
Finally, we present some interesting empirical insights on the training dynamics in deep learning models.
Our visualizations of gradient spectra suggest that adaptive optimizers are indeed correcting for changing anisotropic curvature in the loss landscape.

\subsection{Synthetic data: when do adaptivity and forgetfulness help?} 
\label{subsection:experiments-synthetic}
The original theorems on the behavior of adaptive first-order methods
are established from the perspective of online convex optimization \citep{duchi2011adaptive}.
The dynamics are less understood on realistic loss landscapes in stochastic optimization.
For this reason, we begin our experimental section with
some simple empirical comparisons between full- and diagonal-matrix adaptive optimizers and SGD.
Figure~\ref{fig:synthetic} summarizes our findings.

In each synthetic experiment, we generated an ill-conditioned landscape, and compared
SGD with adaptive optimizers, excluding the typical accompanying heuristics
(i.e. no momentum, regularization, or learning rate schedule).
We tested diagonal-matrix preconditioners with and without exponential gradient attenuation
(like Adam and AdaGrad, respectively), and their full-matrix analogues.
The experiments were robust with respect to the choice of $\eps$ (we used $10^{-4}$)
and batch size.

In the first synthetic experiment \emph{(left)}, we exhibit an instance of logistic regression in dimension 10,
with $10^3$ samples generated from an extremely anisotropic $(\sigma^2_\mathrm{max} / \sigma^2_\mathrm{min} \approx 10^4)$ Gaussian distribution,
and binary labels determined by a random hyperplane.
SGD converges the slowest, and diagonal AdaGrad consistently accelerates optimization.
Finally, full-matrix preconditioning (using cubic-time matrix inversion) converges the fastest.
In this setting, adding a window improved convergence, but not drastically; we elaborate below.

Next, we show an optimization problem \emph{(right)} which accentuates the utility of
exponentially decaying gradient memory.
We consider the problem of minimizing the logarithmic barrier function
of a randomly generated anisotropic polytope,
otherwise known as finding its \emph{analytic center}:
this replaces the logistic loss terms with $f_i(w) = -\log(w^\top x_i + c_i)$,
with $x_i$ generated the same way as above, and $c_i$ generated uniformly from $[0,1]$.
We observed the same ranking of convergence rates as in the first experiment,
but the improvement afforded by the window was much clearer.

The primary conclusion of our synthetic experiments is to demonstrate some small-scale settings in which
adaptive regularization ameliorates anisotropy in the optimization landscape.
A subtler point is that the windowed variants can help with changing curvature,
even for convex losses.
Note that the curvature of the former landscape is constant (in that its Hessian matrix
at different locations $w$ only changes by a scalar factor).
The latter setting, in contrast, features a changing curvature (its Hessians
do not commute in general), necessitating ``forgetfulness'' in adaptive curvature estimation.

In Section~\ref{subsection:experiments-spectra}, we will return to these proof-of-concept
optimization instances, connecting them to an empirical study of curvature in more realistic landscapes.

\begin{figure}
  \centering
  \includegraphics[width=0.95\linewidth]{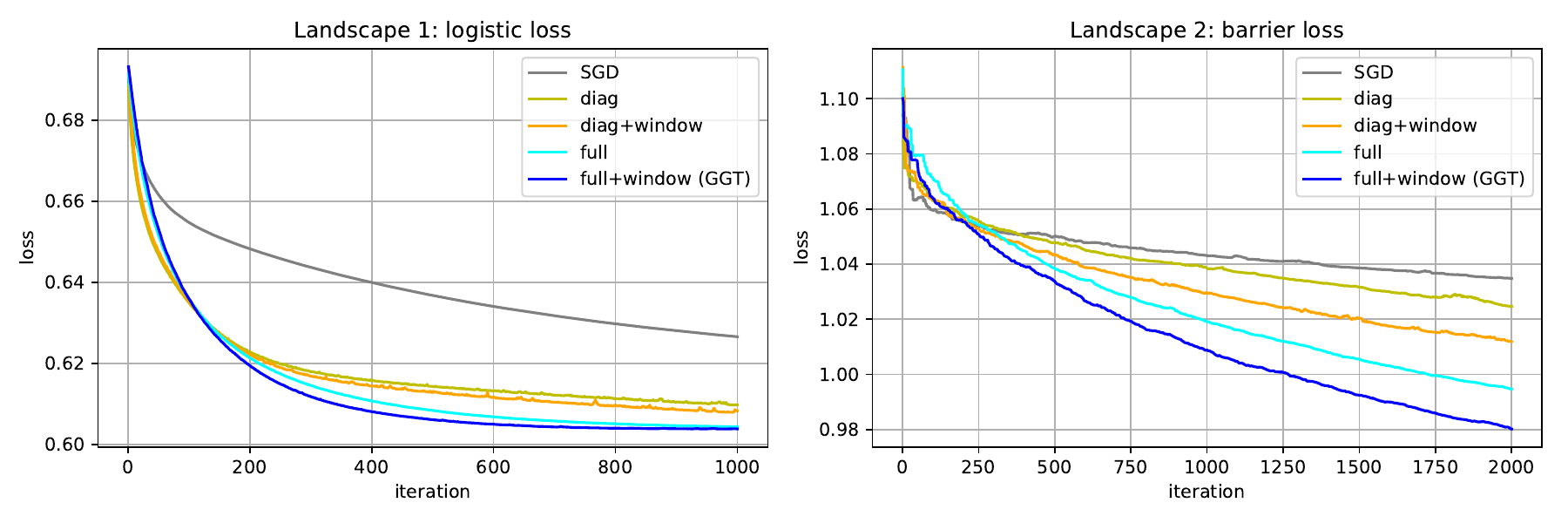}
  \caption{ Synthetic experiments on convex loss functions, demonstrating the value of adaptive regularization and attenuation of gradient history.
  \emph{Left:} An ill-conditioned instance of logistic regression. Adaptive regularization finds a good preconditioner, accelerating optimization.
  \emph{Right:} Minimizing a barrier function, an example where the curvature changes with position. Optimization is further accelerated by \emph{forgetting} outdated gradient information. }
  \label{fig:synthetic}
\end{figure}

\subsection{\ALGNAME{} on deep convolutional models} 
\label{subsection:experiments-vision}
We investigated the training dynamics of \ALGNAME{} on a typical deep architecture for computer vision.
For this, we used a 26-layer 3-branch residual network with Shake-Shake regularization \citep{gastaldi2017shake}.
Aside from its ability to reach state-of-the-art classification accuracy,
this architecture also features a relatively low parameter count ($\sim \! 3$M),
enabling the use of a large window parameter ($r=200$).

In each experiment, we kept the cosine learning rate annealing schedule
used in the paper, originally from \citep{loshchilov2016sgdr};
performance degraded consistently and significantly with a fixed learning rate.
For both Adam and \ALGNAME{}, we chose the commonly used parameters
$\beta_1 = 0.9, \beta_2 = 0.999, \eps = 10^{-8}$; for SGD, we used momentum with parameter $0.9$.
With correctly tuned RMSprop and Adadelta, with the same window parameters,
training curves were virtually identical to those for Adam.
We used a batch size of 128, and the standard data augmentation techniques of 4-pixel padding + random cropping and horizontal flipping.

Our results are shown in Figure~\ref{fig:cnn-rnn} \emph{(top)}.
In terms of training loss, \ALGNAME{} consistently dominated existing optimizers.
We corroborate a number of observations from previous empirical studies of the generalization of optimizers.
Most prominently, we found that SGD generalized slightly better than
all others \citep{wilson2017marginal,keskar2017improving} towards the end of training, including ours.
The gap $(<0.2\%)$ is less dramatic than that seen in \citep{wilson2017marginal} for two reasons:
we only show curves with a tuned and annealed learning rate;
also, we use an architecture with powerful explicit regularization techniques
which have gained attention since their publication.
Our preliminary observation is that \ALGNAME{} shrinks this gap slightly (corroborated by another experiment in Appendix~\ref{sec:appendix-experiments}),
and expect that there is vastly more empirical work to be done
concerning architectures synergistically tuned to existing optimizers.

We also verify the long-held empirical observation that the learning rate decay of AdaGrad
is too aggressive (e.g. in \citep{zeiler2012adadelta}), resulting in convergence to a poor solution.
Finally, in agreement with \citep{wilson2017marginal}, we find that using a sufficiently low
learning rate for any optimizer can result in a better training loss curve,
but not without significantly degrading generalization ($>3\%$ worse).


\subsection{\ALGNAME{} on recurrent models} 
\label{subsection:experiments-language}

\begin{figure}
  \centering
  \includegraphics[width=\linewidth]{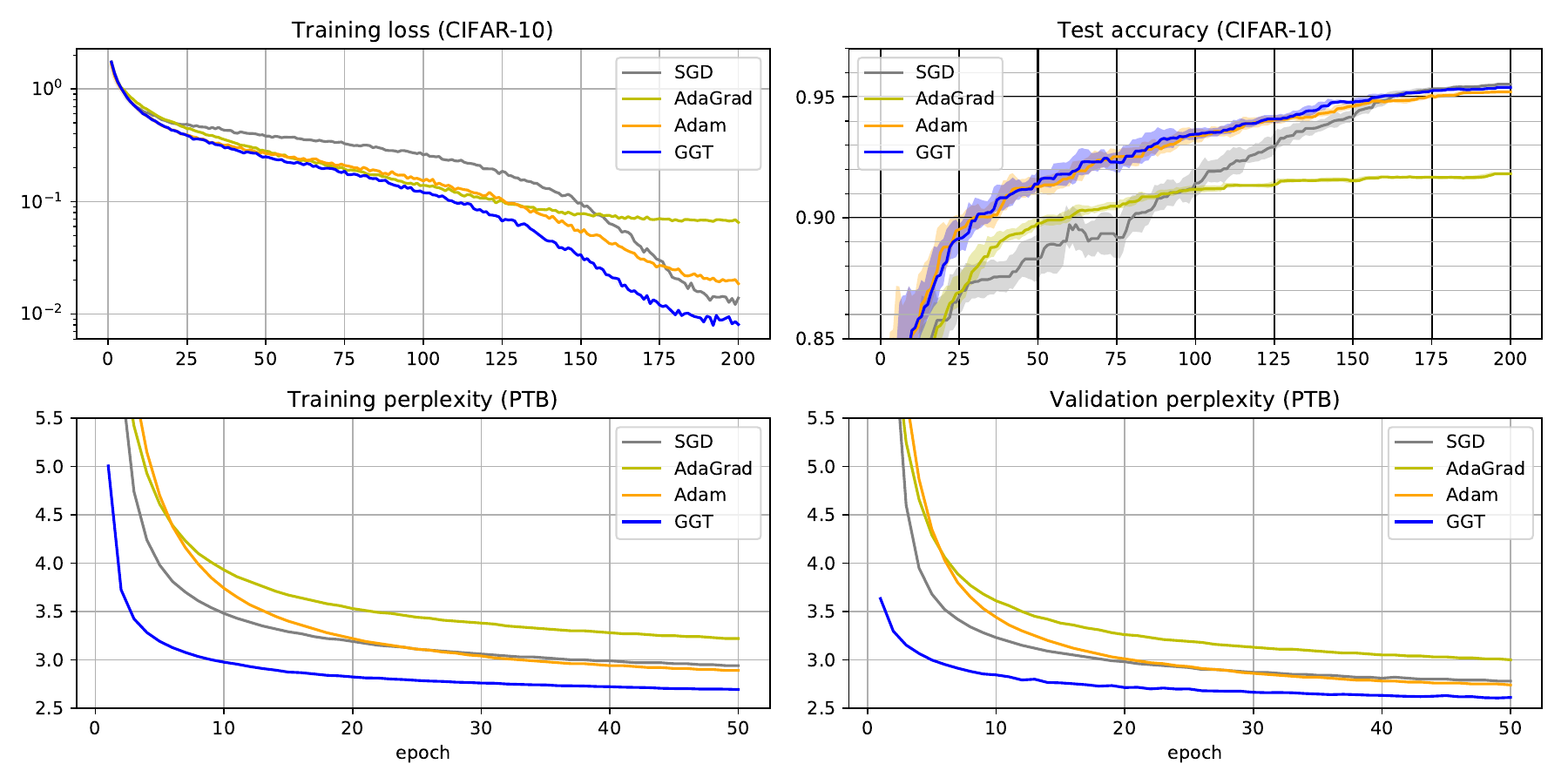}
  \caption{Results of CNN and RNN experiments. \ALGNAME{} dominates in training loss across both tasks,
  and generalizes better on the RNN task.
  \emph{Top:} CIFAR-10 classification with a 3-branch ResNet.
  \emph{Bottom:} PTB character-level language modeling with a 3-layer LSTM.
  }
  \label{fig:cnn-rnn}
\end{figure}

Next, we move to recurrent architectures for language modeling.
We train a 3-layer LSTM \citep{hochreiter1997long} with $\sim\! 5$M parameters for
character-level modeling of the Penn Treebank dataset \citep{marcus1994penn}.
This is the setting in which we observe the most striking improvement over baselines.
The particularities of this optimization task, and why it might be especially amenable
to full-matrix regularization, remain a fruitful research direction \citep{pascanu2013difficulty}.
Figure~\ref{fig:cnn-rnn} \emph{(bottom)} shows training and validation perplexities for the first $50$ epochs;
no optimizer makes significant progress afterwards.

The state of the art for character-level language modeling is less thoroughly documented
than its word-level counterpart,
though we note that our end-to-end result (validation perplexity $2.42$ after $500$ epochs) is competitive with those reported for recurrent models, like by \cite{krueger2016zoneout}. In contrast, Adam, AdaGrad, and SGD reach $2.51$, $2.65$, and $2.76$, respectively. Note that Adam is the \emph{de facto} standard optimizer for language modeling \citep{melis2017state}.
Even with iterations taking twice the time, we outperform all baselines in wall-clock time throughout training.

We also tried using \ALGNAME{} as a drop-in replacement for Adam
in the state-of-the-art word-level language modeling code
accompanying \citep{merity2017regularizing,merity2018analysis}.
Although we were competitive with Adam, we only observed an improvement
in the first $\sim \! 20$ epochs.
We hypothesize that the advantage of full-matrix regularization in this setting is more marginal,
as the gradients in the embedding layers are naturally sparse in the vocabulary (``one-hot'') basis.
On a similar note, we found that Adam outperformed GGT on attention-based architectures for NLP; refer to Appendix~\ref{sec:appendix-experiments} for an experiment and discussion.

\subsubsection{Wall clock time comparisons}

For those interested in end-to-end performance in terms of model training speed, we provide in Figure~\ref{fig:cnn-rnn-wallclock} an alternate visualization for the large-scale experiments, replacing the epoch count with total cumulative training time on the horizontal axis. On the LSTM task, GGT outperforms the baselines throughout training (and converges upon a better solution), even with the additional overhead running time.

The same unconditional improvement was not observed in the vision task, for training convergence nor generalization. We believe this is due to the interactions between modern convolutional architectures and the epoch-dependent learning rate schedule, which we have not attempted to re-tune. Indeed, recent state-of-the-art work in rapid training of convolutional neural nets is centered on a selection of learning rate and momentum schedules, rather than step directions.

\begin{figure}
  \centering
  \includegraphics[width=\linewidth]{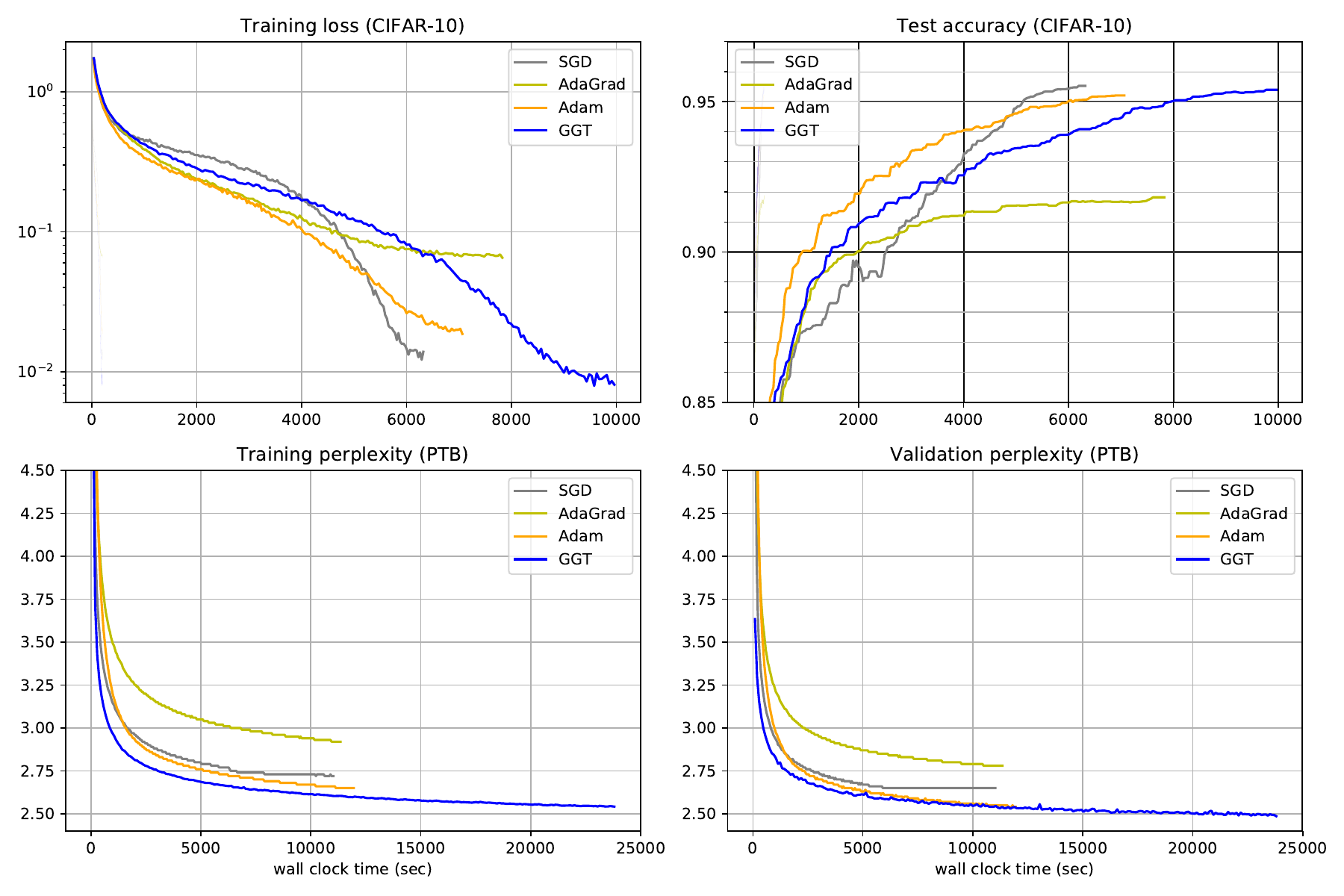}
  \caption{Alternate presentation of Figure~\ref{fig:cnn-rnn}, with wall clock time on the horizontal axis.
  \emph{Top:} CIFAR-10 classification with a 3-branch ResNet.
  \emph{Bottom:} PTB character-level language modeling with a 3-layer LSTM.
  }
  \label{fig:cnn-rnn-wallclock}
\end{figure}

\subsection{Empirical insights on the spectral decay} 
\label{subsection:experiments-spectra}

\begin{figure}
  \centering
  \includegraphics[width=0.7\linewidth]{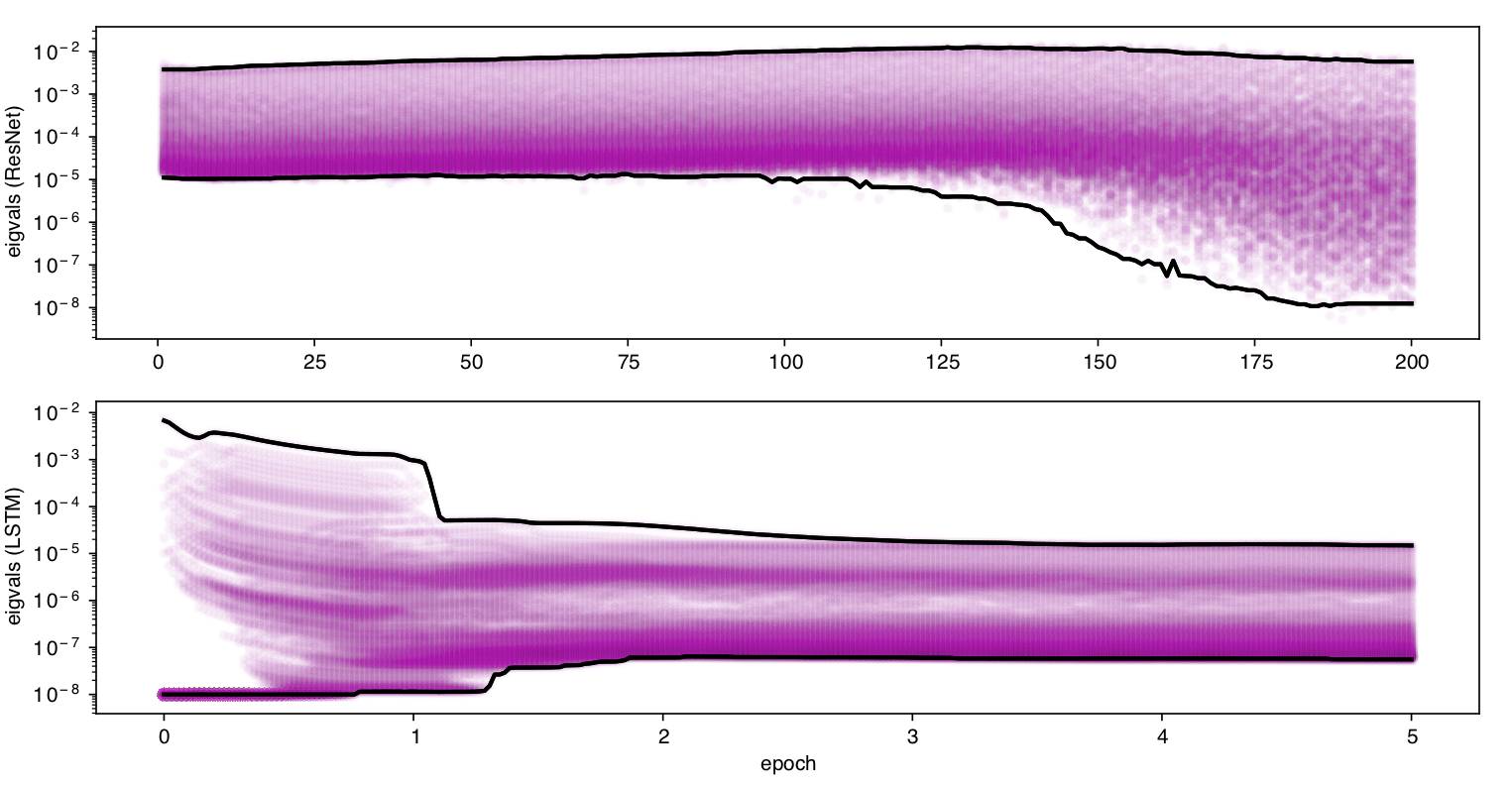}
  \caption{Evolution of the spectrum of the gradient matrix during training.
  Each vertical slice is a density heatmap of the eigenvalues of $\G_t^{\!\top}\!\G_t$.
  The black lines indicate the minimum and maximum eigenvalues, smoothed in time by a median filter.
  \emph{Top:} CNN training. Approaching the end of training, the gradients become more anisotropic. 
  \emph{Bottom:} RNN training. Within the first few epochs, the gradients become more isotropic,
  then stabilize. (Truncated to 5 epochs; the density was visually stable for the remainder of training.) }
  \label{fig:spectra}
\end{figure}


In this section, we unify the insights gleaned from the synthetic experiments and deep learning benchmarks.
Along the way, we provide some interesting anecdotal observations on the evolution of the preconditioner matrices' singular values.

We plot the density of the spectrum of the low-rank preconditioner $\G_t\G_t^{\!\top}$
as training progresses. Since the fast implementation of \ALGNAME{} takes an eigendecomposition of $\G_t^{\!\top}\!\G_t$, we can read off the distribution of eigenvalues during training at no additional computational cost.
Figure~\ref{fig:spectra} visualizes the result of this experiment
for the CNN and RNN training settings from the previous two sections.
In each case, we observe that $\G_t^{\!\top}\!\G_t$ has a condition number of $\sim\! 10^3$,
noting that this can be visualized as the vertical range in the logarithmic plot. 

This visualization affords a new way to see how CNN and RNN landscapes are fundamentally different: their gradient spectra evolve in very distinct ways over the course of training. Interestingly, the condition number of the CNN landscape surges near the end, which may be related to the the low-rank structure of well-trained nets noted by \cite{arora2018stronger}, who derive rank-dependent generalization bounds for neural networks.
On recurrent models, the rapidly evolving spectral structure at the early stage of training indicates a possibly more complex landscape. Intriguingly, the enormous condition number ($\sim\! 10^6$) correlates with the massive lead of \ALGNAME{} over the others, confirming our intuition that full-matrix preconditioning ameliorates anisotropy.

To our knowledge, this is the first empirical study of this kind,
using the covariance matrix of recent gradients as a surrogate to
examining the changing curvature of the loss landscape.
In the spirit of recent empirical lenses of this flavor \citep{raghu2017svcca,li2017visualizing},
we leave this as a way to visualize deep learning dynamics, possibly of independent exploratory interest.


\section{A convergence rate analysis with adaptivity}  \label{sec:theory}

In this section we outline our analysis of \ALGNAME{}, for which we show convergence to an approximate first-order critical point, in some settings faster than SGD. To obtain the strongest theory, we analyze GGT with a ``hard window'' instead of exponentially decaying gradient memory,
explained in Section~\ref{sec:idealization}. 

We work in the usual theoretical framework of stochastic optimization of a differentiable non-convex function $f(\cdot)$,
equipped with an unbiased variance-bounded stochastic gradient oracle
$\widetilde{\nabla} f (\cdot)$.
The objective, as is standard in the literature (see, e.g. \cite{ghadimi2013stochastic,allen2016variance}),
is to find an $\eps$-approximate stationary point $x$; that is, $\|\nabla f(x)\|\leq \eps$.

\subsection{The adaptive ratio}
We quantify the improvement of adaptive regularization by its advantage over the usual worst-case bound of SGD. To this end, we define the \emph{adaptive ratio} $\mu$ of an algorithm $\mathcal{A}$ as 
$$ \mu \defeq \frac{f(x_\mathcal{A}) - f(x^*)}{\|x_1 - x^*\|_2 \cdot \frac{\sigma}{\sqrt{T}}}, $$
where $x_\mathcal{A}$ is the output of the $\mathcal{A}$, and $x^*$ is a comparator. For convex optimization problems $x^*$ is naturally the global minimum. For non-convex optimization it is a subtler choice, which we detail in Appendix \ref{sec:appendix}.

This ratio for the AdaGrad algorithm was shown in \citep{duchi2011adaptive} to be always bounded by a quantity independent of $T$, and potentially much smaller. Specifically, it was shown to be inversely proportional to the dimension in certain convex optimization problems, providing a theoretical justification for the speedup of adaptive optimizers. In Section~\ref{sec:example}, we show a new, simple, and natural setting illustrating adaptive speedup, even for a \emph{strongly convex} function $f$.

\subsection{Adaptive convergence rate guarantee}

We informally state the main theorem below.
We defer the full bound without suppressed smoothness constants, as well as all technical proofs, to Appendix~\ref{sec:appendix}.
\begin{theorem}\label{thm:main-mini}
Let $f : \R^d \rightarrow \R$ be a bounded, Lipschitz, and smooth function with stochastic gradient oracle $\SGDf( \cdot )$, whose variance is at most $\sigma^2$.
In expectation, Algorithm~\ref{alg:nonconvexsmooth} outputs an $\eps$-approximate critical point of $f$, with $\widetilde O\left( \frac{ \mu^2 \sigma^2}{ \eps^4} \right)$ calls to $\SGDf( \cdot )$.
\end{theorem}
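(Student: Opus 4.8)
The plan is to reduce the non-convex convergence analysis to a regret bound for full-matrix AdaGrad run within a single epoch, then to combine epochs via a standard ``descent lemma'' argument. Concretely, within each epoch we run full-matrix AdaGrad on the linearized losses $\langle g_t, x\rangle$; the AdaGrad regret bound from \cite{duchi2011adaptive} gives, for the comparator $x^*$ chosen in the appendix, a bound of the form $\sum_t g_t^\top(x_t - x^*) \le O\!\left(\|x_1 - x^*\|\sqrt{\sum_t \|g_t\|^2}\right)$. By the definition of the adaptivity ratio $\mu$, the left-hand side is exactly $\mu \cdot \|x_1-x^*\|\sqrt{\sum_t\|g_t\|^2}$, so the point of introducing $\mu$ is that it lets us carry the (potentially much smaller than worst-case) adaptivity gain through the rest of the argument verbatim. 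First I would state this per-epoch regret guarantee cleanly, taking care that the $\eps\bI$ regularization and the window/restart structure only perturb it by lower-order terms.

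Next I would convert regret on linearized losses into a statement about $\|\nabla f\|$. The standard device (cf. \cite{ghadimi2013stochastic,allen2016variance}) is: by smoothness, $f(x_{t+1}) \le f(x_t) + \langle \nabla f(x_t), x_{t+1}-x_t\rangle + \frac{L}{2}\|x_{t+1}-x_t\|^2$; take expectations so that $\E[g_t] = \nabla f(x_t)$, sum over an epoch, and telescope $f$. Rearranging, the average squared gradient norm over the epoch is controlled by (i) the function-value decrease across the epoch, (ii) the regret term $\sum_t g_t^\top(x_t - x^*)$, now expressed via $\mu$, and (iii) a variance term of order $\sigma^2$ coming from the step-size normalization and the $\E\|g_t\|^2 \le \sigma^2$ bound. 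Choosing the epoch length and the AdaGrad learning rate to balance these — the epoch length scales like $\mu^2\sigma^2/\eps^2$ to drive the per-step gradient contribution below $\eps^2$, and the function can only decrease by a bounded total amount since $f$ is bounded — yields that after $\widetilde O(\mu^2\sigma^2/\eps^4)$ total stochastic gradient queries, the algorithm has visited an $\eps$-approximate critical point in expectation (outputting a uniformly random iterate). The $\widetilde O$ absorbs the logarithmic factors from the AdaGrad $\sqrt{\log}$-type terms and from a union/amplification over epochs.

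The main obstacle is the choice and handling of the comparator $x^*$ in the non-convex setting: unlike the convex case there is no global minimizer to compare against, so $x^*$ must be selected per-epoch (presumably as a point a bounded distance from $x_1$ in the direction that certifies descent, or the epoch's best iterate), and one must verify that the regret bound and the descent lemma remain compatible with this choice — in particular that $\|x_1 - x^*\|$ stays bounded and that $\sum_t g_t^\top(x_t - x^*)$ genuinely measures progress toward a low-gradient region. A secondary technical point is propagating the bound through expectations despite the preconditioner $[(\G_t\G_t^\top)^{1/2}+\eps\bI]^{-1}$ being a nonlinear, history-dependent function of the stochastic gradients; this is where the idealized ``restart'' abstraction earns its keep, since resetting the preconditioner each epoch limits the dependence and keeps the martingale/conditioning arguments clean. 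Everything else — the smoothness expansion, the telescoping, the variance bookkeeping, and optimizing the free parameters — is routine.
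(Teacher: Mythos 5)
Your proposal identifies the right obstacle — the comparator $x^*$ has no natural meaning on a non-convex function — but stops short of supplying the idea that actually resolves it, and the mechanism you sketch for converting regret into a gradient-norm bound does not go through.

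The paper does not run AdaGrad on linearized losses of $f$ itself and then try to glue a regret bound onto the descent lemma within a single loop. Instead it performs a genuine two-level reduction: Algorithm~\ref{alg:nonconvexsmooth} defines, at each outer step $t$, a \emph{strongly convex surrogate} $f_t(x) = f(x) + \tfrac{3L}{2}\|x - x_t\|^2$, and the inner solver (Algorithm~\ref{alg:ag_smooth}, AdaGrad with restarts) is applied to $f_t$, not to $f$. This proximal regularization is precisely what makes the comparator well-defined: $x^* = \argmin f_t$ exists, and $\|x_t - x^*\|$ is automatically controlled because $f_t$ is $2L$-strongly convex. Theorem~\ref{thm:sc_ag} then converts the AdaGrad regret bound (where $\mu$ lives) into an $O(\mu^2\sigma^2/(LT'))$ suboptimality guarantee for the strongly convex $f_t$ via epoch doubling and Jensen, and Theorem~\ref{thm:reduction} translates suboptimality on the surrogate into a bound on $\|\nabla f(x_t)\|^2$ using the comparison $f_t(\argmin f_t) \le f_t(x_t - \tfrac{1}{4L}\nabla f(x_t))$ together with smoothness. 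That comparison — comparing the surrogate's minimizer against a single gradient-descent step — is the step that makes the descent-lemma argument and the regret argument compatible, and it is exactly what your sketch lacks.

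By contrast, your proposed route has two holes that the paper's construction was designed to avoid. First, the descent-lemma expansion $f(x_{t+1}) \le f(x_t) + \langle\nabla f(x_t), x_{t+1}-x_t\rangle + \tfrac{L}{2}\|x_{t+1}-x_t\|^2$ produces terms in $\langle\nabla f(x_t), G_t^{-1} g_t\rangle$ and $\|G_t^{-1} g_t\|^2$; the regret quantity $\sum_t g_t^\top(x_t - x^*)$ simply does not appear in it, so ``the average squared gradient norm over the epoch is controlled by the regret term'' is an assertion, not a derivation. On a non-convex $f$ there is no inequality $f(x_t) - f(x^*) \le g_t^\top(x_t - x^*)$ to bridge the two. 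Second, taking expectations through $\E[G_t^{-1} g_t \mid x_t]$ is not $G_t^{-1}\nabla f(x_t)$ since $G_t$ depends on $g_t$; the paper sidesteps this entirely by using AdaGrad as a black-box convex solver via its regret guarantee rather than re-analyzing the preconditioned step inside a descent lemma. So the gap is concrete: you need (and are missing) the explicit proximal/strongly-convex surrogate $f_t$, which supplies the comparator, the clean link from regret to function-value error to gradient norm, and the reason the restarts/epochs are structurally necessary rather than cosmetic.
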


This theorem matches and potentially improves the known analysis for stochastic gradient descent with the introduction of the data-dependent adaptivity constant $\mu$ into the leading-order term governing the rate of convergence. Since \cite{duchi2011adaptive} bounded $\mu$ by a quantity independent of $T$, our theorem matches the classic $O\pa{ \eps^{-4} }$ rate of convergence.

\section{Conclusion}
This work investigates full-matrix adaptive regularization: our main contribution is to make this technique viable for large-scale optimization, by a method for efficient multiplication by the inverse square root of a full second-moment matrix over a short window of gradients. This leads to a new algorithm, GGT, a truly scalable optimization algorithm with full-matrix adaptive preconditioning.

Through synthetic experiments, we have shown that GGT accelerates optimization in ill-conditioned loss landscapes; this is supported by accompanying adaptive convergence guarantees. Preliminary experiments show accelerated convergence on standard deep learning benchmarks, with very different training dynamics from existing diagonal adaptive methods. We accompany our algorithm and experiments with the first theoretical characterization of the benefits of adaptive regularization in a non-convex setting. We hope that GGT will be the first of a new class of algorithms for the modern large-scale optimization toolbox, and to foster new discussion towards an ever-elusive understanding of loss landscapes in deep learning.

\section*{Acknowledgments}
We are grateful to Yoram Singer, Tomer Koren, Nadav Cohen, and Sanjeev Arora
for helpful discussions.

\bibliography{main}
\bibliographystyle{alpha}

\appendix
\newcommand{\alglambdadef}{\lambda}
\section{Full adaptive convergence analysis} \label{sec:appendix}
In this section, we give the details on the theoretical treatment of GGT outlined in Section~\ref{sec:theory}. The overall goal is to develop a theory for adaptive regularization in non-convex stochastic optimization. After formalizing the setting, we will define a version of GGT that uses a hard gradient memory window. This will allow us to transfer any insight on the advantage of adaptivity in the convex case to the non-convex case, giving rise to the main theorem. We will conclude this section by with an example illustrating the advantage of adaptive optimizers in the presence of sparse gradients.

\subsection{Setting: stochastic non-convex optimization}
Theorem~\ref{th:th} will provide a bound on the number of stochastic gradient calls required by \ALGNAME{} to achieve a first-order critical point. In particular, the theorem shows that \ALGNAME{} can converge to an approximate first-order critical point faster than SGD,
with convergence rate controlled by the \emph{adaptive ratio} $\mu$, defined in \eqref{eqn:mudef}.

We consider the standard setting of stochastic optimization of a differentiable non-convex function $f(\cdot)$,
equipped with a bounded-variance stochastic gradient oracle defined as follows. 

\begin{definition}[stochastic gradient oracle]
Given a function $f:D \rightarrow \reals$ we call an oracle $O_f$, a $\sigma$-bounded stochastic gradient oracle if for any $x$, $O_f$ returns a a random vector $\SGDf(x)$ such that \[\E\bra{ \SGDf(x) } = \nabla f(x) \quad \text{ and } \quad \E\bra{ \|\SGDf(x) - \nabla f(x)\|^2 } \leq \sigma^2.\]
\end{definition}

The objective, as is standard in non-convex optimization,
is to find a first-order critical point, i.e. a point $x$ for which $\|\nabla f(x)\|\leq \eps$. We will also assume that $f$ has a Lipschitz gradient; i.e. $\|\nabla^2 f(x)\|_2 \leq L$. 

Our algorithm makes a reduction to the case of stochastic convex optimization. The setting formally is that, given a smooth convex function and a $\sigma$-bounded stochastic gradient oracle, the algorithm's aim is to minimize the convex function $f$. Given any algorithm $\mathcal{A}$ we can now define the \emph{adaptive ratio} of the algorithm, referred to as $\mu$, as

\begin{equation}
\label{eqn:mudef}
  \mu \defeq \frac{f(x_\mathcal{A}) - f(x^*)}{\|x_1 - x^*\|_2 \cdot \frac{\sigma}{\sqrt{T}}}
\end{equation}

where $x_\mathcal{A}$ is the output of the algorithm $\mathcal{A}$ and $x^* \in \argmin_x f(x)$, with a total of at most $T$ calls to the stochastic gradient oracle. $\mu$ captures the advantage in convergence rate obtained by the algorithm as compared to the error obtained by vanilla SGD, noting that the denominator is a bound on the error obtained by SGD in the same setting. 

A popular algorithm for stochastic (and in general online) convex optimization is AdaGrad \citep{duchi2011adaptive}. Due to adaptive regularization, AdaGrad can often be advantageous over SGD. We quantify this advantage by the notion of $\mu$ defined above. The bounds of \cite{duchi2011adaptive} imply that $\mu$ can be as small as $\frac{1}{\sqrt{d}}$, depending on the geometry of the optimization problem. An example of this was provided by \cite{duchi2011adaptive} for both the diagonal and the full version of Adagrad. At the end of this section, we provide a different example which shows the same phenomenon even in the case of strongly convex functions.


In the rest of this section we describe Algorithm~\ref{alg:nonconvexsmooth}, which uses AdaGrad (Algorithm~\ref{alg:ag_smooth}) as a subroutine during each window. In this regard, while stating the bounds for our algorithms, we use $\mu$ as an upper bound on the advantage of AdaGrad in each iteration.


\subsection{A suitable abstraction for GGT}
\label{sec:idealization}
As mentioned in Section~\ref{sec:theory}, our analysis uses a slightly idealized version of GGT, which replaces the gradient memory mechanism (governed by $w$ and $\beta_2$) with a \emph{hard} window; i.e., the gradient buffer is \emph{reset} every $w$ steps. This simple modification enables us to develop a more informative theory, in which we benefit directly from the familiar theory of AdaGrad for convex optimization, while capturing the necessity of forgetting past gradient information in adaptive non-convex optimization.

First, for clarity, we restate the definition of the full-matrix AdaGrad algorithm, introduced by \cite{duchi2011adaptive}, which accumulates the second-moment matrix of all past gradients:

\begin{algorithm}
\caption{AdaGrad for convex optimization \citep{duchi2011adaptive}}
\begin{algorithmic}[1]
\INPUT initializer $x_1$, window length $w$, stochastic gradient oracle $\SGDf(\cdot)$, $\eps, \eta > 0$.
\For{$t = 1, \ldots, w$}
  \State Receive stochastic gradient $\SGDf(x_t)$.
  \State Let $\G_t = \bra{ g_t \; g_{t-1} \ldots g_{1} }$, where $g_t := \SGDf(x_t)$.
  \State Update $x_{t+1} \leftarrow x_t - \eta \cdot \bra{ \eps \bI + (\G_t\G_t^\top)^{1/2} }^{-1} g_t.$
\EndFor
\OUTPUT Average iterate $\frac{1}{w}\left(\sum_{t=1}^{w} x_t\right)$.
\end{algorithmic}
\label{alg:ag_smooth}
\end{algorithm}

The final algorithm we analyze simply runs AdaGrad between restarts.

\begin{algorithm}
\caption{GGT with a hard gradient window}
\begin{algorithmic}[1]
\INPUT initializer $x_1$, time horizon $T$, window length $w$, $\lambda > 0$.
\For{$t = 1$ to $T$: }
  \State Let $f_t(x) = f(x) + \alglambdadef \|x - x_t\|^2$.
  \State Update $x_{t+1}$ to be the output of Algorithm~\ref{alg:ag_smooth} on $f_t(x)$, starting at $x_t$, for $w$ steps.
\EndFor
\OUTPUT Best iterate $x_{t^*}$, where $t^* := \argmin_{t\leq T+1} \|\nabla f(x_t)\|$.
\end{algorithmic}
\label{alg:nonconvexsmooth}
\end{algorithm}

The remaining discrepancies between Algorithm~\ref{alg:nonconvexsmooth} and Algorithm~\ref{alg:ours} from the main paper are standard. We provide some references below.
\begin{itemize}
\item \textbf{Absence of first-moment estimation.} Although it is customary to use nonzero $\beta_1$ (otherwise known as momentum) when applying Adam in practice, it is orthogonal to the effect of adaptive regularization in all established theory. In fact, the convergence rates given by \cite{kingma2014adam} (and fixed by \cite{reddi2018convergence}) contain only factors of $1/(1 - \beta_1)$, and are thus strongest when $\beta_1 = 0$.
\item \textbf{Model averaging.} Theoretical guarantees in online and stochastic convex optimization are most naturally stated on the average iterate; see \citep{polyak1992acceleration,duchi2011adaptive}. Thus, we adopt the convention that Algorithm~\ref{alg:ag_smooth} returns the average iterate. We note that model averaging is a common regularization technique in practical non-convex settings, though not the default choice for adaptive optimizers in practice.
\item \textbf{$\ell_2$ regularization.} The addition of the $\lambda \norm{x - x_t}^2$ term in Algorithm~\ref{alg:nonconvexsmooth} is an artifact we introduce to obtain a tight analysis for hard-window GGT. It ensures that iterates in each window do not move too far, and allows us to analyze each window as a fixed convex program, so that we can use the convex theory of AdaGrad directly. The soft-window analogue would simply to be decrease the learning rate. Interestingly, a similar technique directly appears in the algorithm proposed by \cite{allen2017katyusha}. Finally, we note that from a $\sigma$-bounded stochastic gradient oracle for $f$, it is trivial to construct one for $f_t$, by adding $-2\lambda x_t$ (deterministically).
\end{itemize}

\subsection{Main theorem and proof}



\begin{theorem}\label{th:th}
Consider a non-convex function $f$, such that for all $x,\; \|\nabla^2 f(x)\|_2 \leq L$ and a point $x_1$ such that $f(x_1) - \min_{x^*\in\mathcal{K}}f(x^*) \leq M$. Further, suppose we have access to a $\sigma$-bounded stochastic gradient oracle $O_f$. Suppose for any $\lambda \geq \frac{L}{2}$, Algorithm~\ref{alg:nonconvexsmooth} is run with $T=\frac{4M(L + 2\lambda)}{\epsilon^2}$ and $w = \frac{16\mu^2 \sigma^2(L + 2\lambda)}{\epsilon^2(2\lambda - L)}$. Then the point $x'$ returned by Algorithm~\ref{alg:nonconvexsmooth} is such that
  \[ \mathbb{E} \|\nabla f(x')\| \leq \eps,\]
  where $\mu = \max_{t \in [T]} \mu_t$ and $\mu_t$ is the adaptive ratio when run on $f_t$ (as defined in \eqref{eqn:mudef}). Further, note that choosing $\lambda = 3L/2$, the total number of stochastic gradient calls to the oracle $O_f$, made by the algorithm is bounded by $T \cdot w = \frac{ 512 LM \mu^2 \sigma^2 }{ \eps^4 }$.
\end{theorem}

For the setting of Theorem \ref{th:th}, the best known bound on the number of oracle calls to the stochastic gradient oracle in the case of the vanilla SGD algorithm is
$O(\frac{LM\sigma^2}{\epsilon^4})$. Note that due to the presence of $\mu^2$ in the bound provided in Theorem \ref{th:th} reflects the advantage of Algorithm \ref{alg:nonconvexsmooth} over SGD. This advantage as we argue in the following section can be as large as up to a factor of $1/d$, a significant improvement over SGD. 

Before proving Theorem \ref{th:th}, we state an oracle complexity bound for AdaGrad (Algorithm \ref{alg:ag_smooth}) for strongly convex functions.

\begin{lemma}\label{thm:sc_ag}
Suppose f is a $\lambda$-strongly convex function equipped with a $\sigma$-bounded stochastic gradient oracle. Given an initial point $x_1$, Algorithm~\ref{alg:ag_smooth} when run for $w$ steps is guaranteed to output a point $x'$ such that
\[\E[f(x')] - \min_{x}f(x) \leq \frac{\mu^2 \sigma^2 \sqrt{2(f(x_1) - \min_x f(x) )}}{\sqrt{\lambda w}},\]
where $\mu$ is the adaptive ratio of AdaGrad on $f$ as defined in \eqref{eqn:mudef}.
\end{lemma}

Using this lemma we first prove Theorem \ref{th:th} and then finish the section by providing a proof of Lemma \ref{thm:sc_ag}.

\begin{proof}[Proof of Theorem \ref{th:th}]
We begin by proving the following useful property regarding the function $f_t$ for any $t$ and any $\eta$:
 \begin{align*}
    f_t(x_t) - \min_x f_t(x) &\geq f(x_t) - f_t(x_t - \eta \nabla f(x_t)) \\
    &= f(x_t) - f(x_t - \eta \nabla f(x_t)) - \alglambdadef \eta^2 \|\nabla f(x_t)\|^2 \\
    &\geq \eta \|\nabla f(x_t)\|^2 - \frac{L\eta^2}{2} \|\nabla f(x_t)\|^2 - \alglambdadef \eta^2 \|\nabla f(x_t)\|^2.
  \end{align*}
  Setting $\eta = \frac{1}{L + 2\alglambdadef}$, we get that
  \begin{equation}
  \label{eqn:intermed}
   f_t(x_t) - \min_x f_t(x) \geq \frac{\|\nabla f(x_t)\|^2}{2(L + 2\alglambdadef)}.     
  \end{equation}
  We will now prove the theorem by contradiction. Suppose for all the $t$, $\|\nabla f(x_t)\|^2 > \epsilon^2$. We now have that 
  \begin{align}
    f(x_t) - f(x_{t+1}) &\geq f_t(x_t) - f_t(x_{t+1}) \nonumber\\
    &= f_t(x_t) - \min_x f_t(x) - (f_t(x_{t+1}) - \min_x f_t(x)) \nonumber\\
    &\geq f_t(x_t) - \min_x f_t(x) - \frac{\sqrt{f_t(x_t) - \min_x f_t(x)} \sqrt{\epsilon^2}}{2\sqrt{2(L + 2\lambda)}} \nonumber\\
    &\geq f_t(x_t) - \min_x f_t(x) - \frac{\sqrt{f_t(x_t) - \min_x f_t(x)} \sqrt{\|\nabla f(x_t)\|^2}}{2\sqrt{2(L + 2\lambda)}} \nonumber\\
    &\geq \frac{f_t(x_t) - \min_x f_t(x)}{2} \geq \frac{\|\nabla f_t(x_t)\|^2}{4(L + 2\lambda)} > \frac{\epsilon^2}{4(L + 2\lambda)}. \label{eqn:derive}
  \end{align}
where the first inequality follows from noting that $f(x) \leq f_t(x)$ for all $x \in \R^d$, and that $f_t(x_t) = f(x_t)$. The second inequality follows from Lemma \ref{thm:sc_ag}, by noting that $f_t$ is $2\lambda - L$ strongly convex and the choice of $w$. The third inequality follows from the counterfactual assumption $\|\nabla f(x_t)\|^2 > \epsilon^2$, and the last set of inequalities follow from \eqref{eqn:intermed}.

Summing \eqref{eqn:derive} over all $t \in [T]$ gives us that
\[ f(x_1) - f(x_{T+1}) > \frac{T \epsilon^2}{4(L + 2\lambda)} = M,\]
which is a contradiction and hence proves the theorem. The number of stochastic gradient oracle calls when $\lambda = 3L/2$ is bounded by 
\[ T \cdot w \leq \frac{4M(L + 2\lambda)}{\epsilon^2} \cdot \frac{16\mu^2 \sigma^2(L + 2\lambda)}{\epsilon^2(2\lambda - L)} \leq \frac{512 M \mu^2 \sigma^2 L}{\epsilon^4}.\]

\end{proof}

\begin{proof}[Proof of Lemma \ref{thm:sc_ag}]
We have
\begin{align*}
  \E\bra{ f(x') } - f(x^*) &= \E \bra{ \frac{\mu \sigma}{\sqrt{w}} \|x_1 - x^*\| } \leq \E\bra{ \frac{\mu \sigma\sqrt{2(f(x_1)- f(x^*))}}{\sqrt{w \lambda}} },\\
\end{align*}
where the equality follows from the definition of $\mu$ and the inequality follows from strong convexity. 
\end{proof}

\subsection{Example: the advantage of adaptivity}
\label{sec:example}


Here we provide a strongly convex function (in fact a simple quadratic) and a sketch of the proof of the fact that depending on the starting point adaptive advantage i.e. $\mu$ of AdaGrad can be up to a factor of $\sqrt{d}$.

Consider the function $\|x\|^2$ in $\reals^d$ and consider the starting point $x_0$. Let the stochastic gradient oracle $O_f$ be such that before the experiment the oracle samples a random orthonormal basis $V = \{v_1 \ldots v_d\}$ and when queried at a point $x$ returns the vector 
\[ \SGDf(x) = \nabla f(x) + a_t z_t\]
where $a_t = \pm 1$ with probability $1/2$ and $z_t$ is a vector picked from the set $V$ uniformly randomly. It is easy to verify that $O_f$ is a $\sigma$-bounded stochastic gradient oracle. We now provide an analysis of AdaGrad with the above oracle for $f$. 

Firstly note that we can without loss of generality, assume that the basis chosen is the canonical basis $\{e_i\}$. This can be seen by performing a simple rotation which does not affect the function $\|x\|^2$. Further under this setting note that AdaGrad is equivalent to running a one dimensional SGD algorithm in each coordinate independently. The following bound now follows directly from the well known analysis of SGD on smooth functions (see Theorem 6.3 in \cite{bubeck2015convex} for a concrete reference).
\[ \forall\;\;i \in [d]\;\;\;\;(x'[i])^2 \lesssim \frac{|x_1[i]|\sqrt{\sum_{t = 1}^T (\sigma_t[i])^2 }}{T} = |x_1[i]| \cdot \sqrt{\frac{1}{Td} },\]
where $\sigma_t[i] = 1/d$ is the variance of the noise in the stochastic gradient seen at time $t$ along coordinate $i$ and $x'$ is the output of AdaGrad. Note that in the above we have ignored the \emph{bias} term which scales as $1/T$ (refer to Theorem 6.3 in \cite{bubeck2015convex}). This implies that the overall error for AdaGrad scales as
\[\|x'\|^2 \lesssim \|x_1\|_1 \cdot \sqrt{\frac{1}{Td}}.\]
Therefore the advantage of adaptivity $\mu$ is bounded by
\[ \mu \leq \frac{\|x_1\|_1 \sqrt{\frac{1}{Td}}}{\|x_1\|_2 \sqrt{\frac{1}{T}}} = \frac{\|x_1\|_1}{\|x_1\|_2\sqrt{d}}.\]
This follows by noting that the variance of the noise in the stochastic gradient measured in the $\ell_2$ is $1$. The above expression implies that $\mu$ can be as small as $O(\frac{1}{\sqrt{d}})$ in particular if the starting point $x_1$ is sparse or nearly sparse and therefore $\|x_1\|_1 \sim \|x_1\|_2$.


\section{Additional experiments}
\label{sec:appendix-experiments}

\subsection{Experiments on additional architectures}
\begin{figure}
  \centering
  \includegraphics[width=0.95\textwidth]{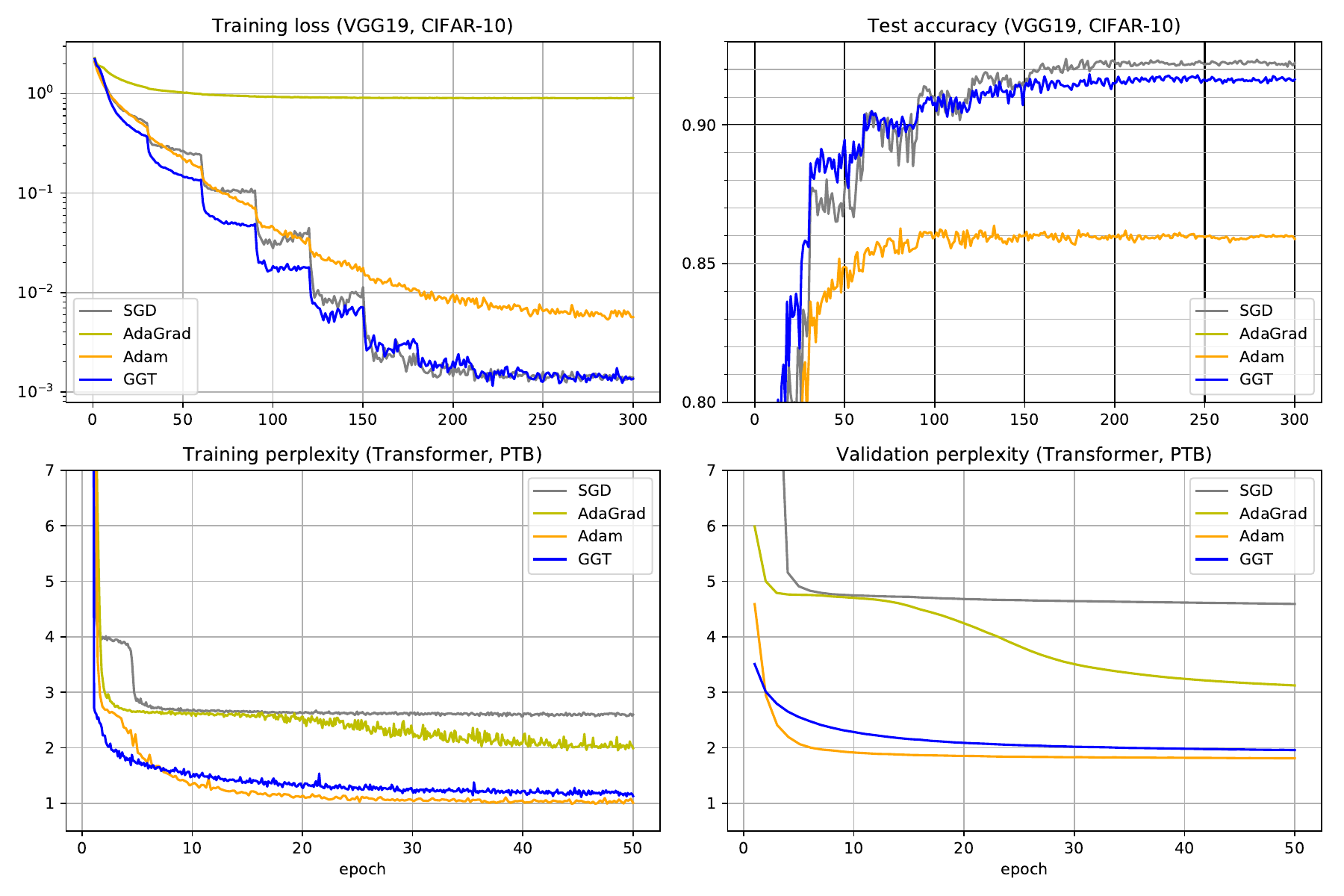}
  \caption{Plot of experiments from Sections~\ref{subsection:experiments-vision} and \ref{subsection:experiments-language}, with wall clock time on horizontal axis instead of epoch count (as in Figure~\ref{fig:cnn-rnn}).
  \emph{Top:} CIFAR-10 classification with a 19-layer vanilla CNN.
  \emph{Bottom:} PTB character-level language modeling a Transformer network.
  }
  \label{fig:more-experiments}
\end{figure}

We present some additional large-scale empirical studies in Figure~\ref{fig:more-experiments}.

To demonstrate a vision task with a harder optimization landscape, we use GGT to train a 19-layer ``vanilla'' convolutional network (VGGNet, \cite{simonyan2014very}), without residual connections or batch normalization, on the same CIFAR-10 classification task. Here, we recover the same insights as found by \cite{wilson2017marginal}, in which diagonal-matrix adaptive methods can fail to train a network dramatically. Here, unlike diagonal-matrix adaptive optimizers, GGT stays on par with SGD throughout training, with a $\sim 1\%$ gap remaining in generalization at the end. We use a standard fixed halving learning rate schedule; it is clear here that in the initial epochs after decaying the learning rate, GGT trains the most rapidly. We leave a careful investigation of leveraging this phenomenon, and tuning GGT's learning rate schedule, to future work.

A recent significant advancement on many NLP tasks, including language modeling, is the introduction of attention-based models. We investigate the behavior of GGT on a Transformer network \citep{vaswani2017attention}, on the same Penn Treebank character-level language modeling task. Here, after an initial lead, GGT is outperformed by Adam in training and validation loss. The value of using gradient correlations to assist in the training of attention models seems to be limited.

\end{document}